
\typeout{IJCAI--PRICAI--20 Instructions for Authors}


\documentclass{article}
\pdfpagewidth=8.5in
\pdfpageheight=11in
\usepackage{actilabel}

\usepackage{times}
\usepackage{soul}
\usepackage{url}
\usepackage[hidelinks]{hyperref}
\usepackage[utf8]{inputenc}
\usepackage[small]{caption}
\usepackage{graphicx}
\usepackage{amsmath}
\usepackage{amsthm}
\usepackage{booktabs}
\usepackage{algorithm}
\usepackage{algpseudocode}
\usepackage{multirow}
\usepackage{subcaption}
\usepackage{booktabs}
\usepackage{xcolor}
\urlstyle{same}


\newtheorem{definition}{\textbf{Definition}}
\newtheorem{problem}{\textbf{Problem}}
\newtheorem{thm}{\textbf{Theorem}}
\newtheorem{lemma}{\textbf{Lemma}}
\newcommand{\figref}[1]{Figure~\ref{fig:#1}}
\newcommand{\tblref}[1]{Table~\ref{tbl:#1}}

\newcommand{\probref}[1]{Problem~\ref{prob:#1}}
\newcommand{\lemmaref}[1]{Lemma~\ref{lemma:#1}}

\title{ActiLabel: A Combinatorial Transfer Learning Framework for Activity Recognition}

\author{
Parastoo Alinia$^1$
\and
Iman Mirzadeh$^1$\and
Hassan Ghasemzadeh$^1$
\affiliations
$^1$Washington State University,  Pullman, WA, United States\\
\emails
\{parastoo.alinia, hassan.ghasemzadeh, seyediman.mirzadeh\}@wsu.edu
\vspace{10mm}
}

\begin{document}

\maketitle
\begin{abstract}
Sensor-based human activity recognition has become a critical component of many emerging applications ranging from behavioral medicine to gaming. However, an unprecedented increase in the diversity of sensor devices in the Internet-of-Things era has limited the adoption of activity recognition models for use across different domains. We propose {\it ActiLabel}, a combinatorial framework that learns structural similarities among the events in an arbitrary domain and those of a different domain. The structural similarities are captured through a graph model, referred to as the {\it dependency graph}, which abstracts details of activity patterns in low-level signal and feature space. The activity labels are then autonomously learned by finding an optimal tiered mapping between the dependency graphs. Extensive experiments based on three public datasets demonstrate the superiority of ActiLabel over state-of-the-art transfer learning and deep learning methods.

\end{abstract}

\section{Introduction}

Human activity recognition (HAR) systems are crucial components in health monitoring and personalized behavioral medicine. HAR systems use machine learning algorithms to detect physical activities based on the data collected from wearable and mobile sensors \cite{piwek2016rise,pantelopoulos2010survey}. Such systems are usually designed based on labeled training data collected in a particular domain, such as with a specific sensor modality, wearing site, or user. A significant challenge with existing HAR systems is that the baseline machine learning model which is trained with a specific setting (i.e., source) performs poorly in new settings \cite{zhang2012generalization,Wang2018}. This challenge has limited scalability of sensor-based HAR system given collecting sufficiently large amounts of labeled sensor data for every possible domain is a time-consuming, labor-intensive, and often infeasible process.

We introduce {\it ActiLabel}, a combinatorial framework that learns machine learning models in a new domain (i.e., target) without the need to manually collect any labels. A unique attribute of ActiLabel is that it examines structural relationships between activity events (i.e., classes/clusters) in two different domains and uses this information for target-to-source mapping. Such structural relationships allow us to compare the two domains at a higher level of abstraction than the common feature space, therefore enable knowledge transfer across radically diverse domains. We hypothesize that even under sever cross-domain spatial and temporal uncertainties (i.e., significant distribution shift), physical activities exhibit similar structural dependencies across different domains, mainly due to the physical and physiological underpinning of human health monitoring. 

To the best of our knowledge, our work is the first study that develops a combinatorial approach for structural transfer learning. Our notable contributions can be summarized as follows. (i) We introduce a combinatorial optimization formulation for transfer learning; (ii) we devise methodologies for constructing a network representation of wearable sensor readings, referred to as {\it network graph}; (iii) we design algorithms that perform community detection on the network graph to identify core activity clusters; (iv) we introduce an approach to construct a dependency graph based on the core activity clusters identified on the network graph; (vi) we propose a novel multi-layer matching algorithm for mapping target-to-source dependency graphs; (vii) we conduct an extensive assessment of the performance of ActiLabel for cross-modality, cross-subject, and cross-location activity learning using real sensor data collected with human subjects.

\section{Background and Related Work}
\subsection{Transfer Learning}
Transfer learning (TL) is the ability to extend the knowledge in one setting to another, nonidentical but related, setting. We refer to the previous setting as the {\it source domain}. The sensor data captured in this domain is referred to as the source dataset, which is fully labeled in our case. The new state of the system, which may exhibit radical changes from the source domain, is referred to as the {\it target domain}, where we intend to label the sensor data autonomously \cite{cook2013transfer,pan2010survey}. Depending on how the availability of the labels in the source and target, one can categorize TL techniques into three groups. Inductive TL is where the source is fully labeled and there are few labeled samples in the target. In transductive TL, which is the focus of this paper, labels are available in the source, but there is no label in the target. Unsupervised TL is where there is no label in neither target or source domains \cite{weiss2016survey,redko2016theoretical}. 
Prior research also proposed a deep convolution recurrent neural network to automate the process of feature extraction and to capture general patterns from activity data \cite{ordonez2016deep}. However, deep learning models have not shown promising performance in highly diverse domains, such as cross-modality knowledge transfer. For example, previous research achieved only $54.2$\% accuracy in recognizing human gestures using deep learning with computationally dense algorithms cross sensors of different modalities \cite{zhu2017multimodal,feichtenhofer2016convolutional}. More advanced models combine knowledge of transfer and deep learning \cite{Yang2017}. There have been studies that transfer different layers of deep neural networks across different domains. In one study, a cross-domain deep transfer learning method was introduced that achieved $64.6$\% accuracy with four activity classes for cross-location and cross-subject knowledge transfer \cite{Wang2018}. Unlike our transductive transfer learning approach in this paper, these approaches fall within the category of inductive transfer learning, where some labeled instances are required in the target domain.

\subsection{Graph Theory Definitions}
k-Nearest Neighbor (k-NN) graphs are commonly used to classify unknown events using feature representations. During the classification process, certain features are extracted from unknown events and classified based on the features extracted from their k-nearest neighbors \cite{chen2009fast,maier2009influence}. k-NN graph of a dataset is obtained by connecting each data point to its k closest points from the dataset based on a distance metric between the data points. The symmetric k-NN graphs are when each point is connected to another only if both are in each other k-nearest neighborhood. 

Community detection algorithms are widely used to identify clusters in large scale network graphs \cite{ferreira2016time}. Recent research suggests that detecting communities from a network representation of data could result in a higher clustering performance compared to traditional clustering algorithms \cite{puxeddu2017community,blondel2008fast}. We define some of the essential features related to community detection in network graphs in the following.

 \begin{definition}[\textbf{Cut}]
    Given a graph $G$($V_N$,$E_N$) and communities $\mathcal{C}$ = \{$C_1$, $\dots$, $C_K$\}, ''{\it Cut}'' between communities $C_i$ and $C_j$ is defined as the number of edges $(u,v)$ with one end in $C_i$ and the other end in $C_j$. That is,
    
    \begin{equation}
        Cut(C_i,C_j) = |(u,v) \in E_N : u \in C_i ~~\&~~ v \in C_j| 
    \end{equation}
    \end{definition}

    \begin{definition}[\textbf{Cluster Density}]
    Given a graph $G$($V_N$,$E_N$) and communities $\mathcal{C}$ = \{$C_1$, $\dots$, $C_K$\} within the graph $G$, ''{\it community density}'', $\Delta$($C_i$), for community $C_i$ is defined as the number of edges $(u,v)$ with both ends residing in $C_i$.
    
    \begin{equation}
        \Delta(C_i) = |(u,v) \in E_N : u \in C_i ~~\&~~ v \in C_i| 
    \end{equation}
    \end{definition}

    \begin{definition}[\textbf{Community Size}]
    Given a graph $G$($V_N$,$E_N$) and communities $\mathcal{C}$ = \{$C_1$, $\dots$, $C_K$\} within the graph $G$, ''{\it Community Size}'', $\sigma$($C_i$), for community $C_i$ is defined as the number of vertices that reside in $C_i$.
    
    \begin{equation}
        \sigma(C_i) = |v \in V_N : v \in C_i| 
    \end{equation}
    \end{definition}

\section{ActiLabel}
We propose ActiLabel to solve the problem of labeling sensor observations in an arbitrary setting (i.e., target) based on the labeled observations in another setting (i.e., source) even when the source and target observations follow highly diverse distributions. ActiLabel aims to create a labeled dataset in the target by transferring the knowledge from the labeled source observations such that the labeling error is minimized. 
 

Assigning a label to each sensor observation in the target domain can be viewed as a mapping problem where sensor observations in the target domain are mapped to labeled observations in the source domain. ActiLabel finds an optimal mapping between the two domain; the mapping, however, is performed at a much higher level of abstraction that the traditional feature level. To this end, mapping in ActiLabel is done from groups of similar target observations, called \textit{core clusters}, to known activity classes in the source domain. The goal of this optimization problem is to minimize the mapping costs/error.


The overall approach in ActiLabel is illustrated in \figref{main-process}. As summarized in Algorithm~\ref{alg:ActiLabel}, the design process in ActiLabel involves the following steps, where we refer to the first two steps as {\it graph modeling} and the next two steps as {\it optimal label learning}. (i) Network graph construction from sensor readings in both domains  \figref{main-process}-a; (ii) Core cluster identification given the network graphs in both domains \figref{main-process}-b. (iii) Dependency graph construction based on the core clusters and network graph in both domains \figref{main-process}-c. (iv) Optimal Label learning by mapping the dependency graphs from the source and target domains  \figref{main-process}-d, \figref{main-process}-e, and \figref{main-process}-f. 

\begin{figure*}[tbh!]
		\centering
		\includegraphics[width=0.8\linewidth]{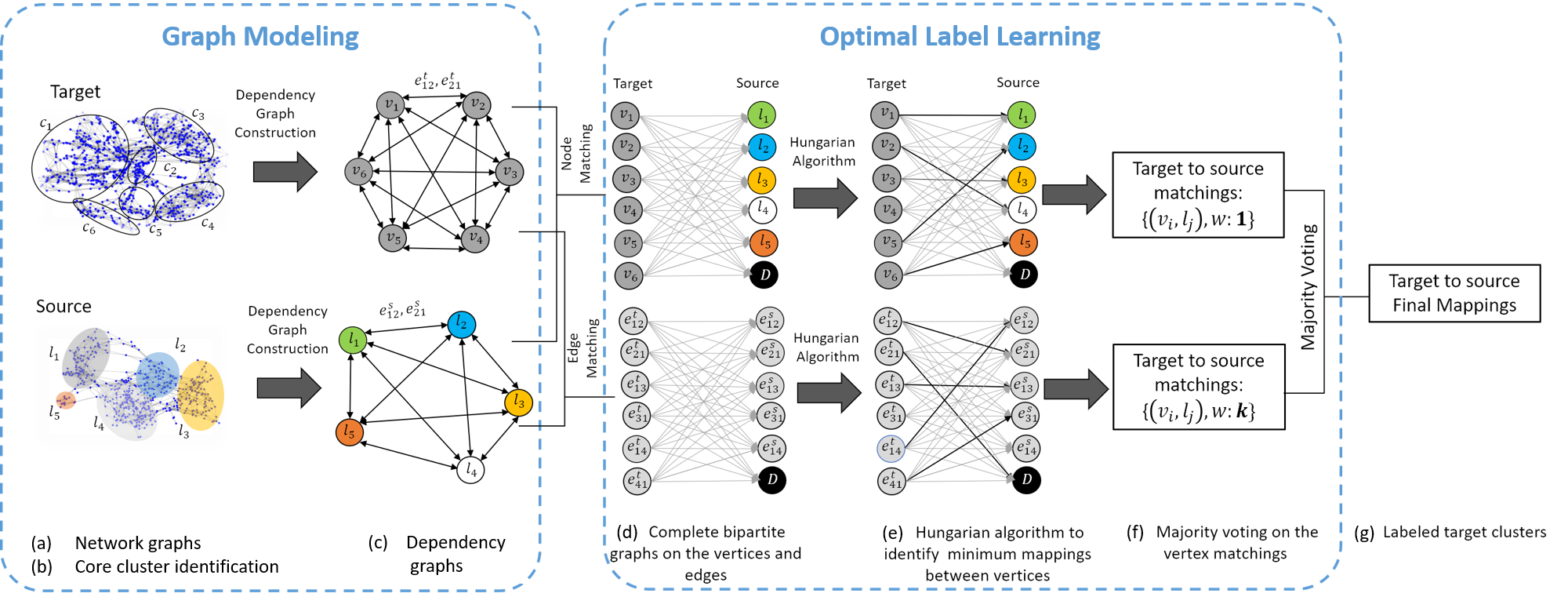}
		\caption{An overview of the ActiLabel framework including graph modeling and optimal label learning.}
		\label{fig:main-process}
\end{figure*}

\begin{algorithm}
\small
		\caption{ActiLabel}
		\begin{algorithmic}[1]
		\Statex \textbf{Input:}\textit{$D_t$, unlabeled target dataset, \{$D_s$, $L_s$\}, labeled source dataset.}
         \Statex \textbf{Result:} {\it Labeled target dataset, \{$D_t$, $L_t$\}}
        \Statex \textbf{Graph Modeling:} \Comment{(section~\ref{sec:meccm})}
        
		 \State \qquad \textit{Construct Network Graphs in both domains;} \Comment{(section~\ref{subsec:network-graph-construct}) }
		  \State \qquad \textit{Identify core clusters in both domains; } \Comment{(section~\ref{subsec:core-cluster-identification})}
		  \State \qquad \textit{Build Dependency graphs;} \Comment{(section~\ref{subsec:dependency-graph-construct})}
		  \State \qquad \textit{Extract structural relationships among the core clusters in both domains;}
	   
	        \Statex \textbf{Optimal Label Learning} \Comment{(section~\ref{subsec:optimal-label-learning})}
			\State \qquad \textit{Perform graph-level min-cost mapping from target to source;}
			\State \qquad \textit{Assign labels to the observations in target;}
			\State \qquad \textit{Train activity recognition model in target using new labels;}
		\end{algorithmic}
		\label{alg:ActiLabel}
	\end{algorithm}

	\subsection{Graph Modeling} \label{sec:meccm}
   We construct dependency graphs that capture structural dependencies among the events (i.e., physical activities) in both target and source domains. The dependency graphs are then used in {\it optimal label learning} to label sensor observations and generate a training dataset in the target domain. As shown in \figref{main-process}, our graph modeling consists of three phases: (i) network graph construction; (ii) core cluster identification; and (iii) dependency graph construction. This section elaborates on each phase.
    
    \subsubsection{Network Graph Construction}
    \label{subsec:network-graph-construct}
    We initially build a network representation of the sensor observations based on symmetric k-nearest-neighboring to quantify the amount of similarity between pairs of observations.
    
    \begin{definition}[Network Graph]
        The network graph refers $G_N$($V_N$,$E_N$) is a symmetric k-NN graph where vertices are feature representation of the sensor data and distance function is the cosine similarity between the features.
    \end{definition}

    \subsubsection{Core Cluster Identification}
    \label{subsec:core-cluster-identification}

    To identify core clusters in ActiLabel, we propose a graph-based clustering algorithm similar to the approach in prior research \cite{barton2019chameleon}. We refer to this approach as {\it core cluster identification} (CCI), which runs on the network graph $G$($V_N$,$E_N$) in two steps. (i) Partitioning the network graph into multiple communities of approximately the same vertex size using a greedy community detection technique. (ii) Merging the communities with the highest similarity score based on their dendrogram structure. 
    
    The amount of similarity $\alpha_{i,j}$ between communities $C_i$ and $C_j$ is measured as the ratio of the number of edges between the two communities (i.e., $Cut$($C_i$,$C_j$)) to the average number of edges that reside within the two communities. Therefore, the similarity score of $\alpha_{i,j}$ is given by
    
    \begin{equation}\label{eq:similarity}
    \alpha(i,j)= \frac{Cut(C_i,C_j)}{\frac{|C_i|+|C_j|}{2}}
    \end{equation}
    
    \noindent where the terms $|C_i|$ and $|C_j|$ denote the number of edges that reside in $C_i$ and $C_j$, respectively. Note that the similarity score $\alpha$ is defined such that it is not adversely influenced by the size of communities in unbalanced datasets.

	\subsubsection{Dependency Graph Construction}
	\label{subsec:dependency-graph-construct}
	To capture high-level structural relationships among sensor observations, we devise a structural dependency graph where the core clusters identified previously represent vertices of the dependency graph.
	
	\begin{definition}[Dependency Graph]
		Given a network graph $G$($V_N$,$E_N$) where $|V_N|$ = $|\mathcal{X}|$ and core clusters $\mathcal{C}$ = \{$C_1$, $\dots$, $C_K$\} obtained from the network graph, we define dependency graph $G$($V_D$ ,$E_D$, $W^v_D$, $W^e_D$) as a weighted directed complete graph as follows. Each vertex $u_i \ in V_D$ is associated with a core cluster $C_i \in \mathcal{C}$. Thus, $|V_D|$ = $|\mathcal{C}|$. Each vertex $u_i \in V_D$ is assigned a weight $w^u_i$ given by
		
		\begin{equation}\label{eq:vertex_weight}
		 w^u_i  = \frac{\Delta(C_i)}{\sigma(C_i)|}
		\end{equation}
		
		\noindent where $\Delta(C_i)$ and $\sigma(C_i)$ refer to cluster density and cluster size, respectively, for core cluster $C_i$. Each edge $(u_i, u_j) \in E_D$, associated with core clusters $C_i$ and $C_j$, is assigned a weight $w^e_{ij}$ given by
		
        \begin{equation}\label{eq:edge_weight}
            w^e_{ij} = \frac{Cut(C_i, C_j)}{\sigma(C_j)}
        \end{equation}
		\end{definition}


   	\begin{algorithm}[tbh]
   	\small
        \caption{Optimal Label Learning}
        \begin{algorithmic}[1]
            \Statex \textbf{Input:}\textit{$G_D^t$ and $G_D^s$, dependency graphs for target and source domains.}
            \Statex \textbf{Result:} {\it Labeled target dataset, \{$D_t$, $L_t$\}}
            \State {\it Construct bipartite graph $BG_e$ using edge components;}
             \State {\it Obtain bipartite mapping $M_e$ on $GB_e$;}
            \State {\it  Construct bipartite graph $BG_v$ on vertex components;}
            \State {\it  Obtain bipartite mapping $M_v$ on $GB_v$;}
            \State {\it  Construct bipartite graph $BG_c$ using $M_e$ and $M_v$;}
            \State {\it Obtain bipartite mapping OptMapping on $GB_c$;}
            \State {\it Assign source labels to appropriate core clusters in target using OptMapping;}
        \end{algorithmic}
        \label{depmatch}
        
    \end{algorithm}
    
    \subsection{Optimal Label Learning}
    \label{subsec:optimal-label-learning}
    Algorithm \ref{depmatch} summarizes the steps for optimal label learning. The goal of the optimal label learning is to find a mapping from the dependency graph in the target to that of the source domain while minimizing the mapping error. We refer to this optimization problem as min-cost dependency graph mapping and define it as follows.

    \begin{problem}[Min-Cost Dependency Graph Mapping]\label{prob:dgmcm}
        Let $G_D^s$ and $G_D^t$ denote dependency graphs obtained from datasets in the source and target domains, respectively. The min-cost dependency graph mapping is to find a mapping $R : G_D^t \rightarrow G_D^s$ from $G_D^t$ to $G_D^s$ such as the cost of such mapping is minimized.
    \end{problem}

    \probref{dgmcm} can be viewed as a combinatorial optimization problem that finds an optimal mapping in a two-tier fashion: (i) it initially performs component-level mappings where vertex-wise and edge-wise mappings are found between source and target dependency graphs; and (ii) it then uses the component-level mappings to reach a consensus about the optimal mapping for the problem as a whole. Such a two-level mapping problem can be represented using the objective in \eqref{eq:graphmatching}. 
    \begin{equation}
    \label{eq:graphmatching}\
    Minimize~~~ \sum_{i=1}^{|V_D^t|}\sum_{j=1}^{|V_D^s|} 1-\frac{\mu(i,j)}{M}
    \end{equation}
    
    \noindent where $\mu(i,j)$ represents the number of mappings between $v_i \in V_D^t$ and $v_j \in V_D^s$ obtained through the component-level optimization. Furthermore, $M$ is a normalization factor that is equal to the total number of component-wise mappings. The objective in \eqref{eq:graphmatching} attempts to minimize the mapping cost at the graph-level and, therefore, can be viewed as the objective for \probref{dgmcm}.
    
    We build a weighted complete bipartite graph on the components of the dependency graphs to find the minimum double-cost mapping. \figref{main-process}-d is an illustration of such a bipartite graph where the nodes on the left shore of the graph represent components (e.g., node weights) of the target dependency graph and the nodes on the right shore of the bipartite graph are associated with corresponding components in the source dependency graph.
    
    In constructing a bipartite graph, a weight $\omega_{ij}$ is assigned to the edge that connects node $i$ in the target side to nodes $j$ in the source side. This weight also represents the actual mapping cost and is given by

    \begin{equation}
    \omega_{ij} = |w_{si}-w_{tj}|
    \end{equation}
    
    \noindent where $w_{si}$ and $w{tj}$ are the weight values associated with component $i$ in the source domain and component $j$ in the target domain, respectively. We note that these weights can be computed using \eqref{eq:vertex_weight} and \eqref{eq:edge_weight} for vertex-wise mapping and edge-wise mapping, respectively. We also note that if the number of components in source and target were not equal, we could add dummy nodes to one shore of the bipartite graph to create a complete and balanced bipartite graph. 
    
    We use Hungarian Algorithm (a widely used weighted bipartite matching algorithm with $O(m^3)$ time complexity) \cite{kuhn1955hungarian} to identify an optimal mapping from the nodes on the left shore of the bipartite graph to the nodes on the right shore of the graph. 

    The last step is to assign the labels mapped to each cluster to the target observations within that cluster. A classification model is trained on the labeled target dataset for physical activity recognition.
    
    \subsection{Time Complexity Analysis}
\begin{lemma}\label{lemma:tc_gm}
    The graph modeling in ActiLabel has a time complexity of $O(n^2)$ where '$n$' denotes the number of sensor observations.
\end{lemma}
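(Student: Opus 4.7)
The plan is to bound the running time of each of the three phases of graph modeling separately and then observe that the quadratic cost of the first phase dominates. Throughout, I will let $n$ denote the number of observations, $k$ the fixed number of neighbors used in the k-NN graph, and $K$ the number of core clusters obtained from community detection. A standard (and reasonable) assumption is that $k$ is a constant independent of $n$, and that $K \ll n$; both are needed to translate ``dominated by pairwise work'' into the clean $O(n^{2})$ bound.

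First I would handle \emph{network graph construction} (\secref{network-graph-construct}). Building the symmetric k-NN graph under cosine similarity requires computing pairwise similarities for all $\binom{n}{2}$ observation pairs, giving $O(n^{2})$ work once the feature dimension is treated as a constant. Extracting, for each vertex, its $k$ smallest distances can be done with a linear-time selection in $O(n)$ per vertex, so the total k-NN extraction is $O(n^{2})$; symmetrization is a single pass over the $O(nk)=O(n)$ candidate edges. Hence Phase~1 runs in $O(n^{2})$ and produces a graph with $|E_N|=O(n)$.

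Next I would bound \emph{core cluster identification} (\secref{core-cluster-identification}). The partitioning step uses a greedy community detection method on $G(V_N,E_N)$, which runs in time polynomial in $|V_N|+|E_N|=O(n)$; for Louvain-style procedures this is $O(n\log n)$ and in any case strictly below $O(n^{2})$. The dendrogram-based merging operates on at most $K$ initial communities and requires recomputing the similarity score $\alpha(i,j)$ from \eqref{eq:similarity}; with $Cut$ values prepared once by scanning $E_N$ in $O(n)$, the hierarchical merge costs $O(K^{3})$ in the worst case, which is $o(n^{2})$ under $K \ll n$. So Phase~2 is $O(n^{2})$ as well, with room to spare.

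Finally I would handle \emph{dependency graph construction} (\secref{dependency-graph-construct}). The vertex set has size $K$ and the edge set $K(K-1)$, so instantiating the graph is $O(K^{2})$. For the weights in \eqref{eq:vertex_weight} and \eqref{eq:edge_weight}, the key observation is that all $\Delta(C_i)$, $\sigma(C_i)$, and $Cut(C_i,C_j)$ values can be computed together by a \emph{single} pass over $E_N$: each edge increments either the density of one cluster or the cut between two clusters, depending on its endpoints. This pass costs $O(|E_N|)=O(n)$, and the subsequent normalizations cost $O(K^{2})$. So Phase~3 is $O(n+K^{2})$, again subquadratic.

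Summing the three phases, the total cost is $O(n^{2}) + O(n) + O(n+K^{2}) = O(n^{2})$, proving the lemma. I expect the main obstacle to be the precise accounting in Phase~2: the greedy community detection has several variants with different complexities, so I would cite a concrete algorithm (e.g., Louvain) whose runtime is known to be near-linear in the number of edges, and argue that the $O(nk)$ sparsity of $G_N$ keeps this comfortably below $O(n^{2})$. The Phase~1 $O(n^{2})$ term is then clearly the bottleneck and yields the claimed bound.
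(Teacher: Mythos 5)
The paper gives no proof of this lemma --- its ``proof'' is the single sentence ``The proof is eliminated for brevity'' --- so there is nothing to compare your argument against; what you have written is a complete argument that supplies exactly what the paper omits. Your decomposition is the natural one and is almost certainly the intended reasoning: the $\Theta(n^2)$ pairwise cosine-similarity computation for the symmetric $k$-NN graph is the bottleneck, and both core cluster identification and dependency graph construction operate either on the sparse graph $G(V_N,E_N)$ or on the $K$ clusters, hence run in subquadratic time under $K \ll n$ (an assumption consistent with the paper's own $m \ll n$ assumption in \thmref{complexity}). One small caveat: you assume $k$ is a constant so that $|E_N| = O(n)$, whereas the paper's experiments choose $k$ as $2$--$5\%$ of the network size, i.e.\ potentially $k = \Theta(n)$. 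This does not break the lemma --- $|E_N| = O(nk) \subseteq O(n^2)$ always, so every pass over $E_N$ and the cut/density bookkeeping stay within $O(n^2)$ --- but your Phase~2 citation of a near-linear-in-edges community detection algorithm would then give $O(n^2)$ rather than $o(n^2)$, and any variant with a superlinear dependence on $|E_N|$ would need a separate check. Stating the constant-$k$ assumption explicitly, as you do, is the right way to keep the argument clean.
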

\begin{proof}
    The proof is eliminated for brevity.
\end{proof}

\begin{lemma}\label{lemma:tc_oll}
    The optimal label learning phase in ActiLabel has a time complexity of $O(n + m^3)$ where $n$ denotes the number of sensor observations and $m$ represents the number of classes.
\end{lemma}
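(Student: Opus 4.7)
The plan is to walk through Algorithm~\ref{depmatch} line-by-line, bound each step, and then sum the costs to obtain $O(n + m^3)$. Throughout, let $m$ denote the number of classes (hence the number of core clusters and the number of vertices in each dependency graph) and let $n$ denote the number of target sensor observations. Since the dependency graphs are complete directed graphs on $m$ vertices, they have $m$ vertices and $O(m^2)$ edges each, with weights \eqref{eq:vertex_weight}--\eqref{eq:edge_weight} already computed during graph modeling, so no recomputation is needed here.

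First I would bound the construction costs of the three bipartite graphs in lines~1,~3, and~5. Each edge weight $\omega_{ij} = |w_{si} - w_{tj}|$ is a single absolute-difference evaluation, so the two $m \times m$ bipartite graphs $BG_v$ and $BG_c$ are built in $O(m^2)$ time each. Then I would apply the Hungarian bound $O(k^3)$ on a $k \times k$ bipartite instance, as quoted in the paper, to $BG_v$ and $BG_c$ in lines~2 and~6, contributing two terms of $O(m^3)$. Line~7 sweeps over the target observations, assigning to each of the $n$ points the label of the core cluster it belongs to via the OptMapping; using a precomputed cluster-membership table from the graph-modeling phase, this is a single $O(n)$ pass. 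Summing these contributions already gives $O(n) + O(m^2) + O(m^3) = O(n + m^3)$.

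The hard part will be handling the edge-component mapping $M_e$ in lines~1--2, which threatens to dominate the bound. A naive treatment views each of the $\Theta(m^2)$ dependency-graph edges as an independent component, producing a bipartite instance of size $\Theta(m^2) \times \Theta(m^2)$ and an $O(m^6)$ Hungarian cost. To stay within $O(m^3)$, the proof must exploit the fact that each dependency edge is indexed by an ordered pair of clusters, so the edge-matching step can be organized either (i) as $O(m)$ independent Hungarian subproblems of size $m \times m$, one per source-side endpoint, with their outputs concatenated to form $M_e$, or (ii) as a step that is strictly dominated by the consensus matching in line~6, where edge agreements enter only through the counts $\mu(i,j)$ in \eqref{eq:graphmatching} and therefore add at most $O(m^2)$ bookkeeping. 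Under either reading the edge-wise step is absorbed into $O(m^3)$, and combining this with the bounds above completes the analysis. The remaining bookkeeping (precomputing cluster memberships and the weight tables) is routine and fits inside the terms already accounted for.
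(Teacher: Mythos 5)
First, note that the paper itself gives no argument for this lemma (its proof is ``eliminated for brevity''), so there is no reference proof to match; what follows is an assessment of your argument on its own terms. Your accounting for lines~3--7 of Algorithm~\ref{depmatch} is correct: $BG_v$ and $BG_c$ are $m\times m$ bipartite instances, each built in $O(m^2)$ and solved by the Hungarian algorithm in $O(m^3)$, and the final label assignment is a single $O(n)$ sweep over target observations, giving $O(n+m^3)$ for those steps.

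The genuine gap is exactly where you flag it, in lines~1--2, and neither of your two proposed resolutions closes it. The algorithm explicitly constructs $BG_e$ ``using edge components'' and computes the mapping $M_e$ on it \emph{before} the consensus graph $BG_c$ is built, so your reading~(ii) --- that the edge step is absorbed into line~6 as $O(m^2)$ bookkeeping --- contradicts the algorithm as written: $M_e$ is an input to the construction of $BG_c$ in line~5 and must actually be computed. Since the dependency graphs are complete directed graphs on $m$ vertices, the natural $BG_e$ has $\Theta(m^2)$ nodes per shore and the Hungarian algorithm on it costs $O\!\left((m^2)^3\right)=O(m^6)$, not $O(m^3)$. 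Your reading~(i) does not rescue the bound either: $O(m)$ independent Hungarian subproblems of size $m\times m$ cost $O(m)\cdot O(m^3)=O(m^4)$, so your claim that ``under either reading the edge-wise step is absorbed into $O(m^3)$'' is false as stated; moreover it is unclear how to partition the edge-matching by ``source-side endpoint'' without already knowing the vertex correspondence that the consensus step is supposed to produce. To prove the lemma as stated you would need to exhibit a concrete construction of $BG_e$ whose matching costs $O(m^3)$, which the paper does not supply and your argument does not either. (As a side remark, even the weaker bound $O(n+m^6)$ would still suffice for Theorem~\ref{thm:complexity} under the assumption $m\ll n$, but it would not establish the lemma in the form claimed.)
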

\begin{proof}
    The proof is eliminated for brevity.
\end{proof}

\begin{thm}\label{thm:complexity}
    The time complexity of ActiLabel is quadratic in the number of sensor observations, $n$.

\end{thm}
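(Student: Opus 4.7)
The plan is to derive \thmref{complexity} as a direct corollary of \lemmaref{tc_gm} and \lemmaref{tc_oll}, by observing that Algorithm~\ref{alg:ActiLabel} consists of exactly two sequential phases: graph modeling (lines 1--4) and optimal label learning (lines 5--7). Sequential composition of algorithms adds time complexities, so the overall running time is the sum of the two phase-level bounds.

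First I would invoke \lemmaref{tc_gm} to bound the cost of the graph modeling phase by $O(n^2)$, where $n = |V_N|$ is the number of sensor observations feeding into the network graph construction. Next I would invoke \lemmaref{tc_oll} to bound the cost of the optimal label learning phase by $O(n + m^3)$, where $m$ denotes the number of activity classes (equivalently, the number of core clusters, i.e.\ the size of $V_D$). Adding the two contributions yields a total running time of
\begin{equation}
T(n,m) \;=\; O(n^2) + O(n + m^3) \;=\; O(n^2 + m^3).
\end{equation}

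The delicate step, and the place where the statement of the theorem requires a modeling assumption rather than pure algebra, is reducing $O(n^2 + m^3)$ to $O(n^2)$ in a single parameter $n$. In the activity recognition setting targeted by ActiLabel, the number of distinguishable physical activity classes $m$ is dictated by the application (e.g.\ walking, sitting, lying, etc.) and does not grow with the number of collected samples $n$; formally, $m$ is a fixed constant, or at worst $m = O(n^{2/3})$, so that $m^3 = O(n^2)$. Under this standard assumption, the $m^3$ term is absorbed into the $n^2$ term and the overall time complexity collapses to $O(n^2)$, proving that ActiLabel is quadratic in the number of sensor observations. I would close the argument by explicitly stating this assumption on $m$ so that the dependence on the class count is transparent to the reader.
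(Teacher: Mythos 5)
Your proposal is correct and follows essentially the same route as the paper: sum the bounds from \lemmaref{tc_gm} and \lemmaref{tc_oll} and absorb the $m^3$ term under an assumption that the number of classes is small relative to $n$. If anything, you are slightly more precise than the paper, which only assumes $m \ll n$, whereas you correctly identify that the condition actually needed is $m^3 = O(n^2)$, i.e.\ $m = O(n^{2/3})$ (or $m$ constant).
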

\begin{proof}
    Assuming that the number of classes, $m$, is much smaller than the number of sensor observations, $n$, (i.e., $m  \ll n$), the proof follows \lemmaref{tc_gm} and \lemmaref{tc_oll}.
\end{proof}

\section{Experimental Setup}
	\subsection{Datasets}
	We use three sizeable human activity datasets to evaluate the performance of ActiLabel. We refer to these datasets as PAMAP2, a physical activity monitoring dataset used in \cite{reiss2012introducing}, DAS, daily \& sport activity dataset used in \cite{barshan2014recognizing}, and Smartsock, a dataset containing ankle-worn sensor data used in \cite{fallahzadeh2016Smartsock}. \tblref{datasets} has provided a summary of the datasets utilized in this study.
	
	\begin{table}[ht]
	    \caption{Brief description of the datasets utilized for activity recognition. The sensor modalities include accelerometer: ACC, gyroscope: GYR, magnetometer: MAG, temperature: TMP, orientation: ORI, heart rate: HR, stretch sensor: STR, and locations are chest: C, ankle: A, hand: H, left arm: LA, left leg: LL, right arm: RA, right leg: RL, torso: T.  }
		\label{tbl:datasets}
		\centering
		\small
		\resizebox{0.45\textwidth}{!}{
		\begin{tabular}{lrrrrll}
			\toprule
			Dataset    & \#Sub. & \#Act. & \#Sample & Sensors  & Locations                                                                \\ \midrule
			PAMAP2     & 9         & 24         & 3850505        & \begin{tabular}[l]{@{}l@{}}ACC, GYR, \\ HR, TMP, \\ ORI, MAG\end{tabular} & C, H, A\\\midrule
			DAS        & 8        & 19          & 1140000 & \begin{tabular}[l]{@{}l@{}}ACC, GYR\\ MAG\end{tabular}                 &\begin{tabular}[l]{@{}l@{}} LA, RA, \\ LL, RL, T\end{tabular} \\ \midrule
			Smartsock & 12        & 12         &   9888      & ACC, STR              & A                              \\ \bottomrule
		\end{tabular}
		}
	\end{table}
	
	\begin{figure}[ht]
    \centering
        \includegraphics[width=\linewidth]{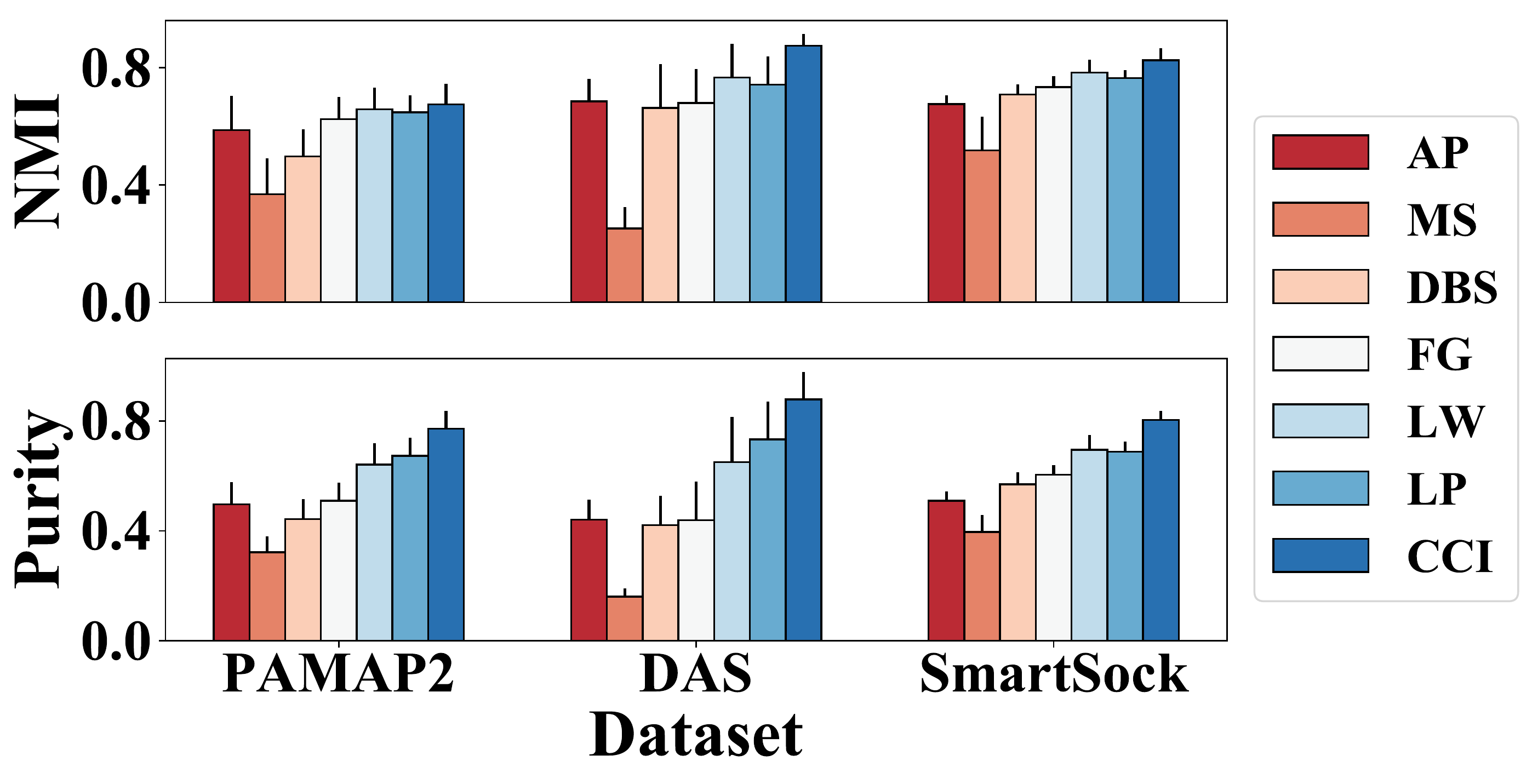}

      \caption{Performance comparison between core cluster identification in ActiLabel and standard clustering and communication detection algorithms.}
      \label{fig:nmi-avg}
      \vspace{-3mm}
    \end{figure}
    
    \subsection{Comparison Methods}
    We compare the performance of ActiLabel with the following algorithms. (i) Baseline, which learns a shallow classifier in the source domain and deploys it for activity recognition in the target domain. (ii) Deep Convolution LSTM, \cite{ordonez2016deep} which learns a deep classifier in the source domain and applies it for activity recognition in the target domain. (iii) DirectMap, which directly maps centroids of the clusters in the target to activity classes in the source domain to create a labeled dataset in the target. (iv) Upper-bound, which learns a classifier assuming that the actual labels are available in the target domain. We assess the performance of ActiLabel during three scenarios: (i) cross-modality transfer when sensors in the two domains have different modalities (e.g., accelerometer and gyroscope), (ii) cross-subject transfer across two different human subjects, and (iii) cross-location transfer when the target and source location of the wearable sensor is different. 
    
    \subsection{Implementation Details}
	The datasets are divided into $50\%$ training, $25\%$ test, and $25\%$ validation parts with no overlap to avoid bias. We extracted an exhaustive set of time-domain features from a sliding window of size 2 seconds with 25\% overlap. The extracted features include mean value, peak amplitude, entropy, and energy of the signal which are shown to be useful in human physical activity estimation using inertial sensor data \cite{mannini2010machine,saeedi2014cost}. We reduce the features dimension using UMAP \cite{mcinnes2018umap} algorithm before clustering. 
	
	We analyzed the effect of hyper-parameter $k$ in the $k$-NN network graph on the performance of CCI as measured by NMI and purity. As shown in \figref{k-NN}, NMI achieved its highest value (i.e., $0.67$ for PAMAP2, $0.88$ for DAS, and $0.83$ for Smartsock) when $k$ was set to $2$\% or $5$\% of the graph network size. This translates into a $k$=$8$ for PAMAP2 and Smartsock and $k=11$ for DAS datasets.
	 \begin{figure}[tbh!]
    \centering
        \begin{subfigure}{.24\textwidth}
          \includegraphics[width=\linewidth]{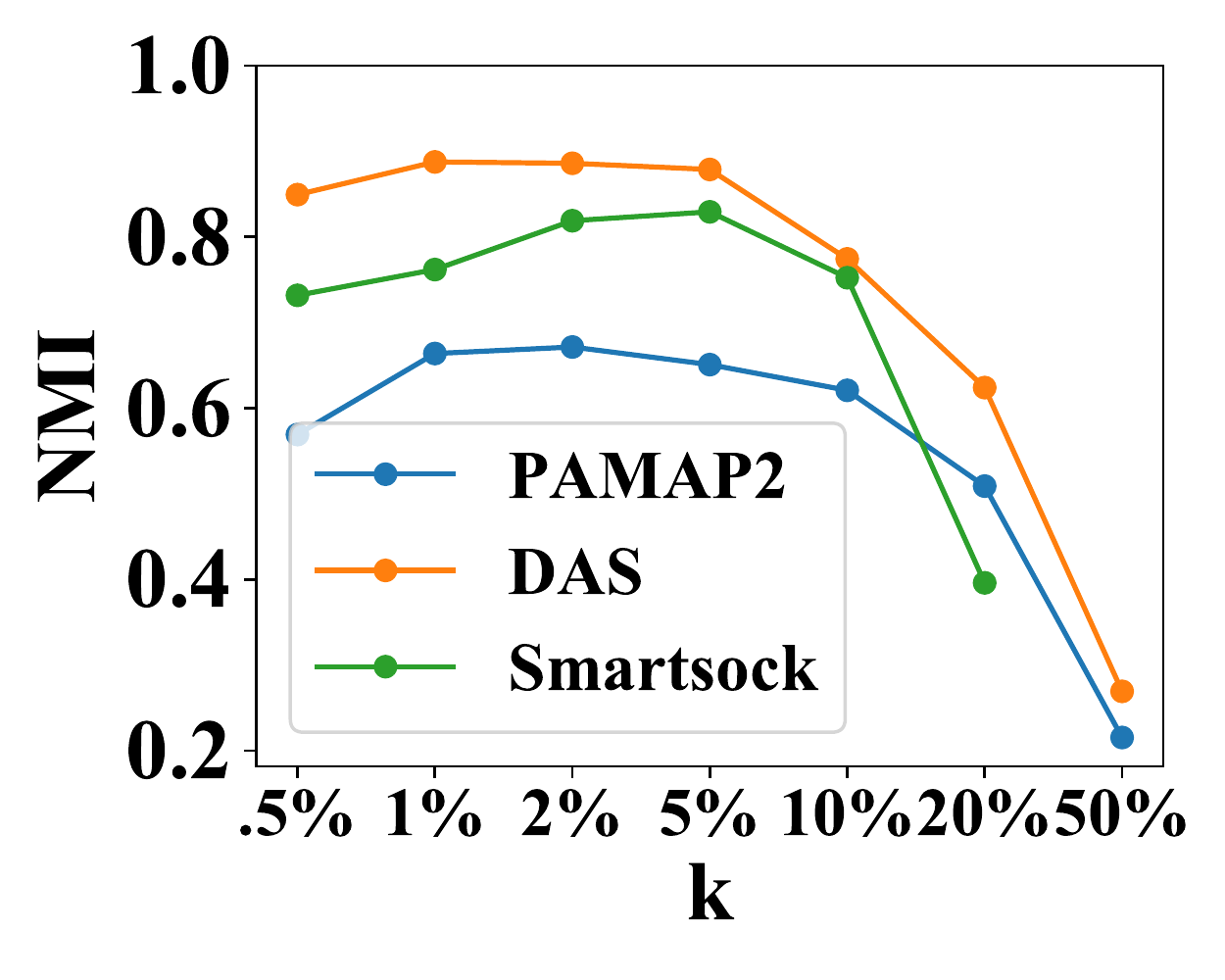}
          \caption{NMI}
          \label{fig:kfig1}
        \end{subfigure}%
        \begin{subfigure}{.24\textwidth}
          \includegraphics[width=\linewidth]{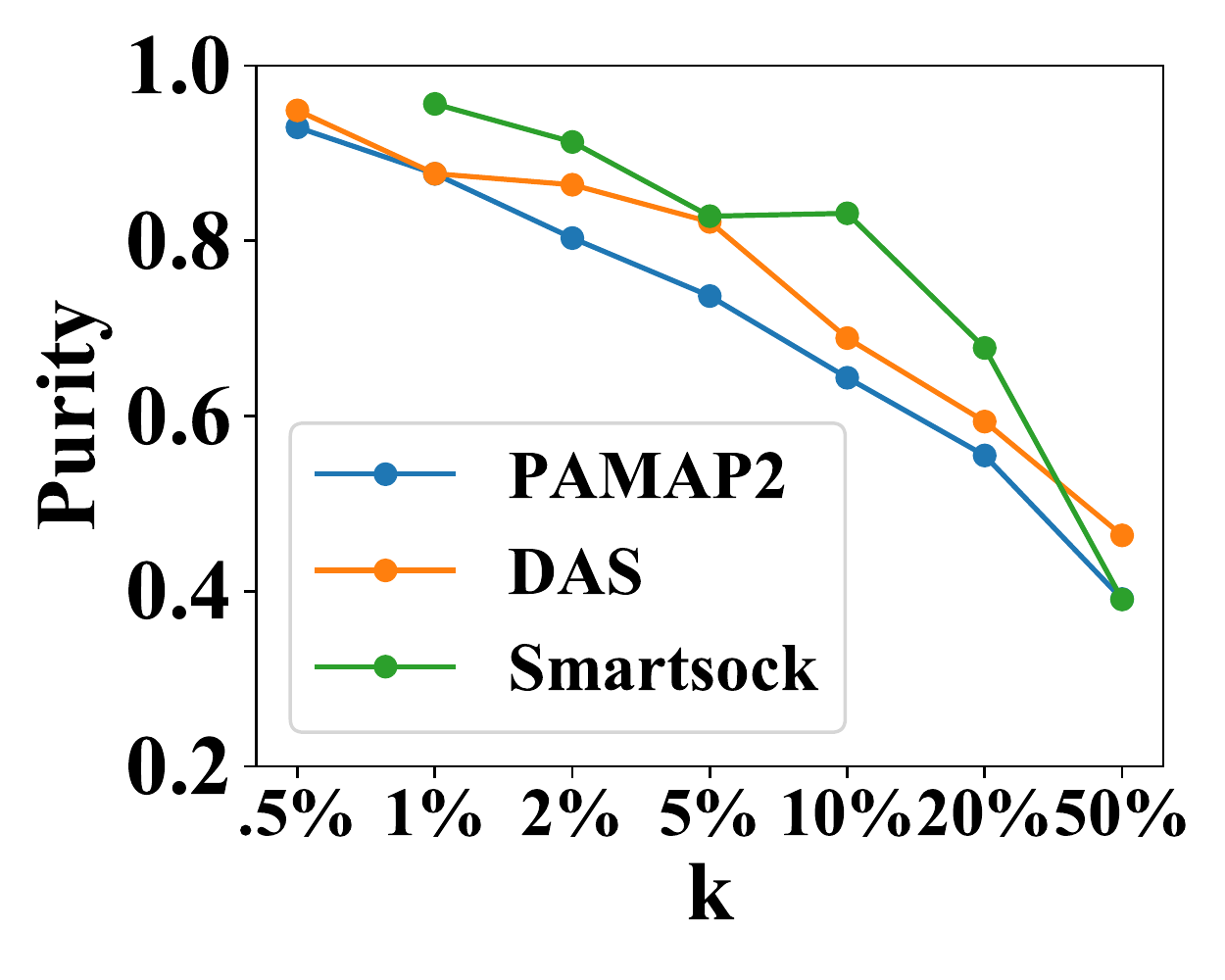}
          \caption{Purity}
          \label{fig:kfig2}
        \end{subfigure}
        \caption{Performance of CCI versus parameter $k$ in network graph construction.}
        \label{fig:k-NN}
    \end{figure}

    \section{Results}
    
    We analyzed the effect of hyper-parameter $k$ in the $k$-NN network graph on the performance of the core cluster identification as measured by {\it normalized mutual information (NMI)} and clustering {\it purity} \cite{rendon2011comparison}. The results demonstrate $k=8$ for PAMAP2 and Smartsock and $=k11$ for DAS datasets as an optimal value.

    \subsection{Performance of Core Cluster Identification}\label{sec:nmi}
    
    As shown in \figref{nmi-avg}, CCI outperforms state-of-the-art clustering and community detection algorithms. The NMI for the competing methods ranged from 0.37--0.65 for PAMAP2, 0.25--0.77 for DAS, and 0.52--0.76 for Smartsock. The proposed algorithm CCI increased NMI to 0.67, 0.87, and 0.85 for PAMAP2, DAS, and Smartsock datasets, respectively.  Note that the clustering was generally more accurate for Smartsock and DAS datasets because PAMAP2 contained data from sensor modalities (e.g., temperature) that might not be a good representative of the activities of interest.

    \subsection{Labeling Accuracy of ActiLabel}
    In this section, we report the labeling accuracy of DWMatching algorithm proposed in Section~\ref{subsec:optimal-label-learning} as the ratio of correctly labeled observations to all named \textit{labeling accuracy}. The labeling accuracy of the DWMatching algorithm mainly depends on the purity of the activity clusters and similarity between distribution of the data in the source and target.
    
    \subsubsection{Cross-Modality} 
    As shown in \figref{heatmap_mod}, accelerometer, gyroscope, magnetometer, and orientation modalities higher labeling accuracy (i.e.,  $70.2\%$--$88.0\%$) as the target sensor across all three datasets. In PAMAP2, the labeling accuracy drops to the range $45$\%--$0.65$\% when orientation and heart rate sensors are the target modality which shows the weak clustering of their observations into the activity classes and diverse data distribution from other modalities such as accelerometer. In Smartsock, DWMatching achieves $71.5\%$--$88.0\%$ labeling accuracy between an accelerometer and a stretch sensor.
     \begin{figure}[tbh!]
\centering
    \begin{subfigure}{0.75\columnwidth}
        \begin{subfigure}{0.95\columnwidth}
            \includegraphics[width=\linewidth]{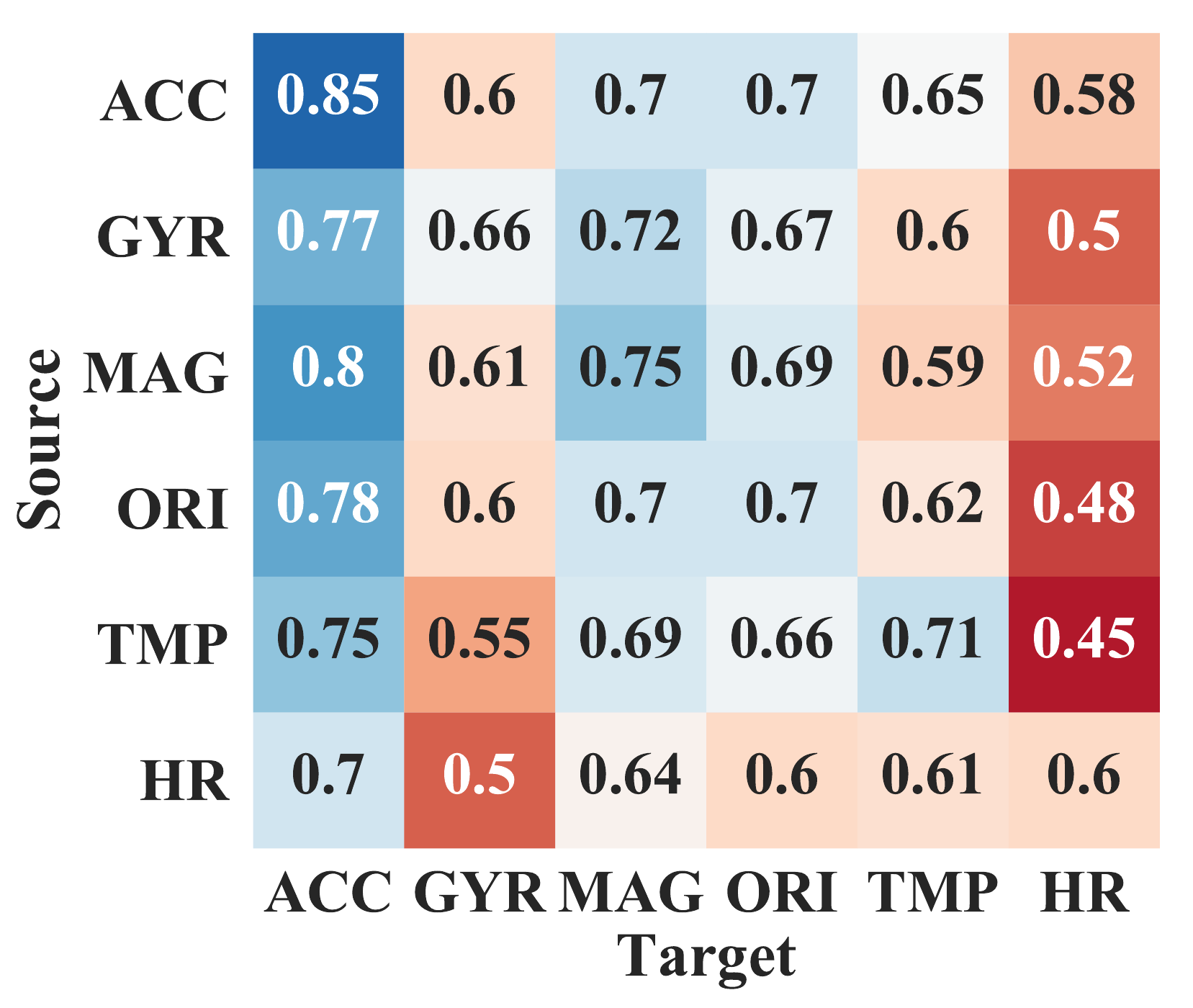}
            \caption{PAMAP2}
        \end{subfigure}%
        
         \begin{subfigure}{0.45\columnwidth}
            \includegraphics[width=\linewidth]{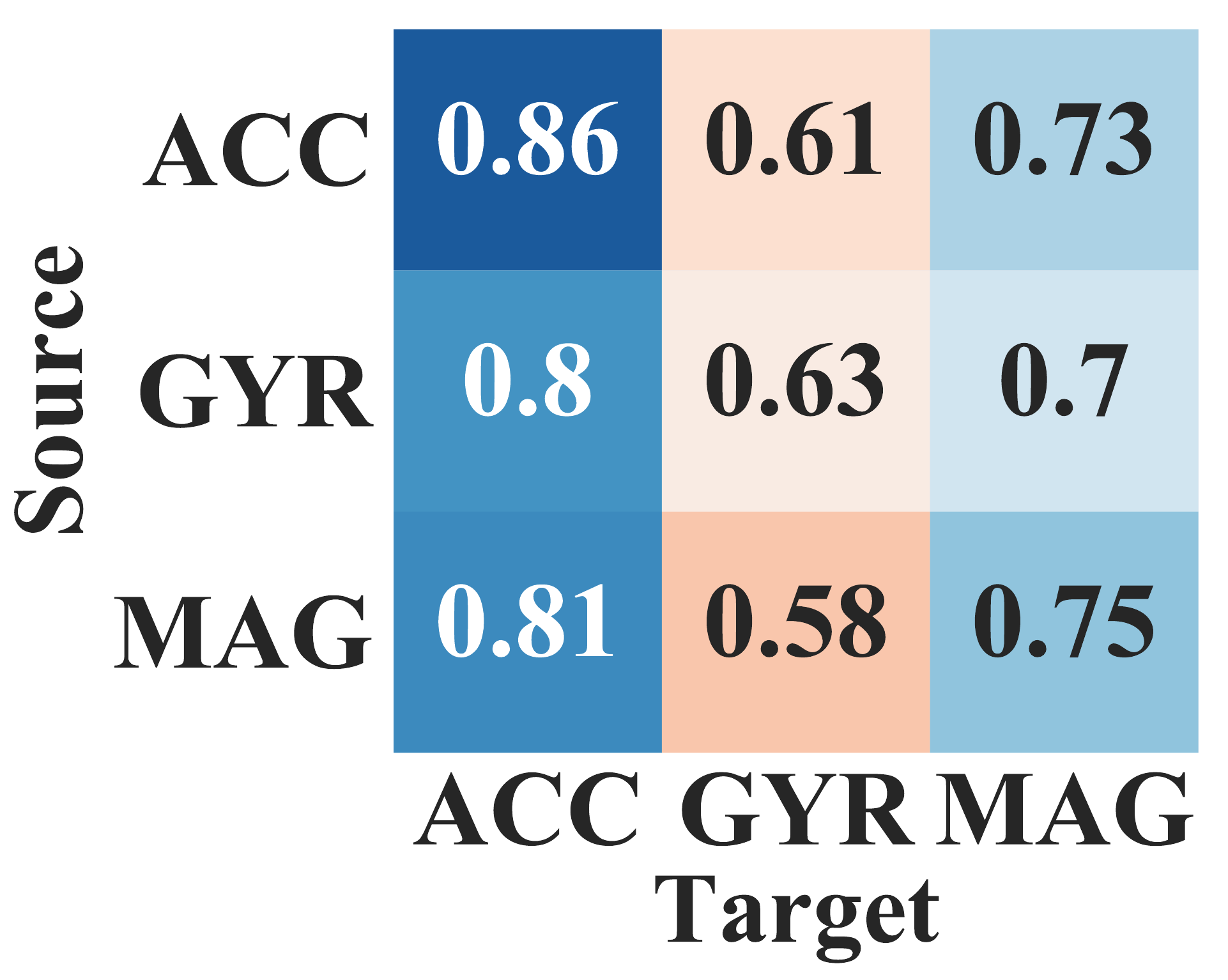}
            \caption{DAS}
        \end{subfigure}
        \begin{subfigure}{0.45\columnwidth}             \includegraphics[width=\linewidth]{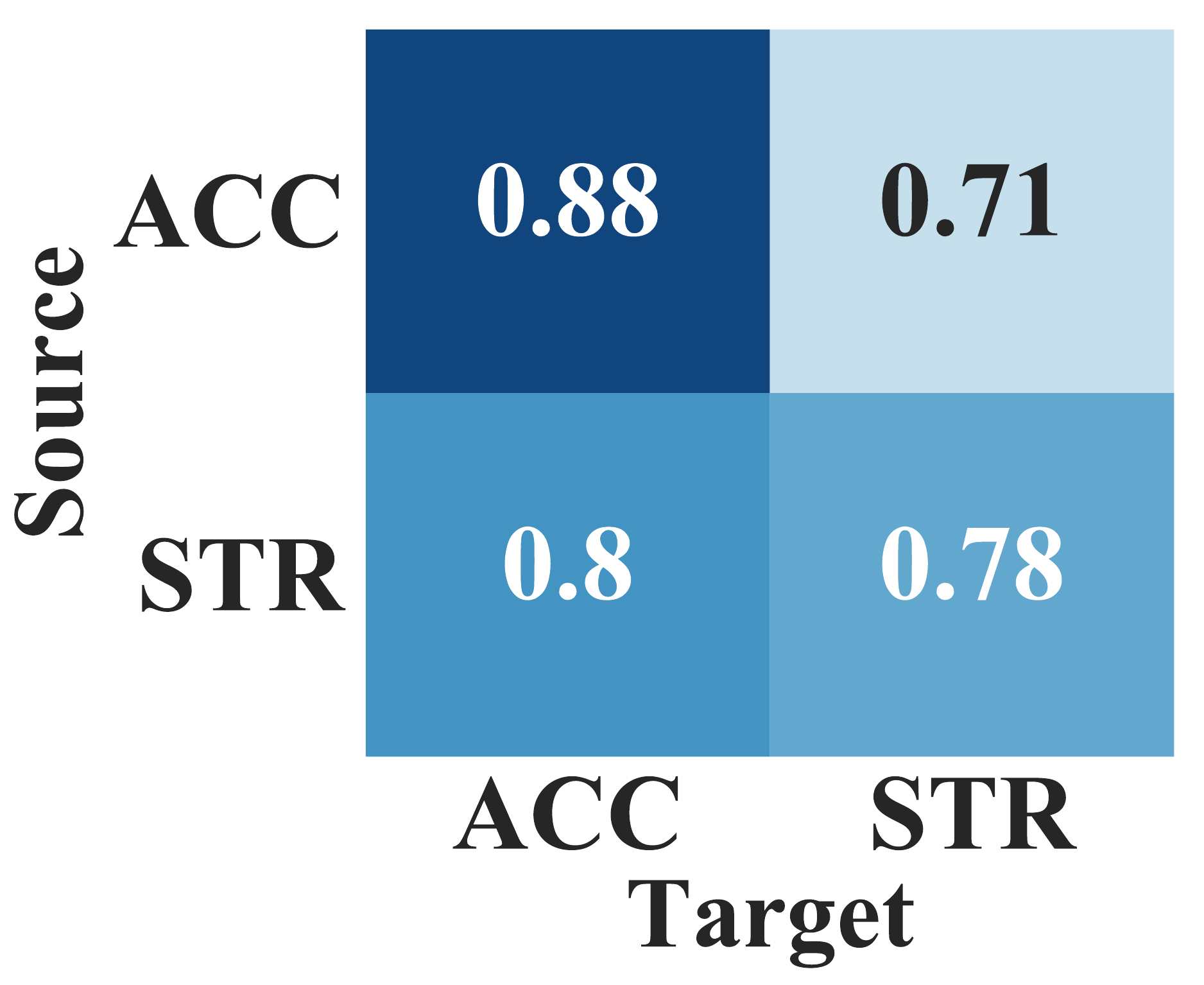}
            \caption{Smartsock}
            \label{fig:heatmap_loc}   
        \end{subfigure}
  \end{subfigure}
   \begin{subfigure}{0.12\columnwidth}
    \includegraphics[width=\linewidth]{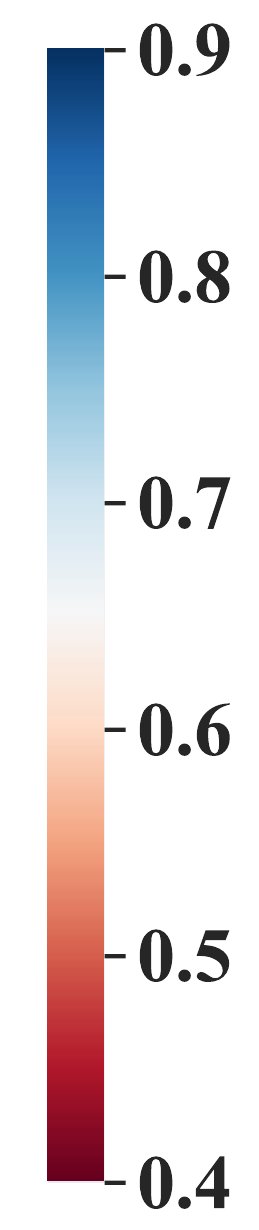}
  \end{subfigure}
  
    \caption{Labeling accuracy of ActiLabel for cross-modality scenario.}
\end{figure}

    \subsubsection{Cross-Location}
      Mappings between the same or similar body locations such as "chest to chest", and "left arm to right arm" achieve high labeling accuracies (i.e., $>70.3\%$). The labeling accuracy between dissimilar locations in the DAS dataset, such as "left leg to right arm" and "left arm to right leg", drops to the range $58.3\%$--$65.1\%$. Although chest, ankle, and hand are dissimilar body locations, mappings between them from the PAMAP2 dataset achieve $70.3\%$--$80.1\%$ labeling accuracy since data in each location comes from the rich collection of sensor modalities that provide sufficient information about inter-event structural similarities captured by our label learning algorithm. The cross-location transfer does not apply to the Smartsock dataset since it contained only one sensor location.
     \begin{figure}[tbh!]
\centering
        \begin{subfigure}{0.18\textwidth}
              \includegraphics[width=\linewidth]{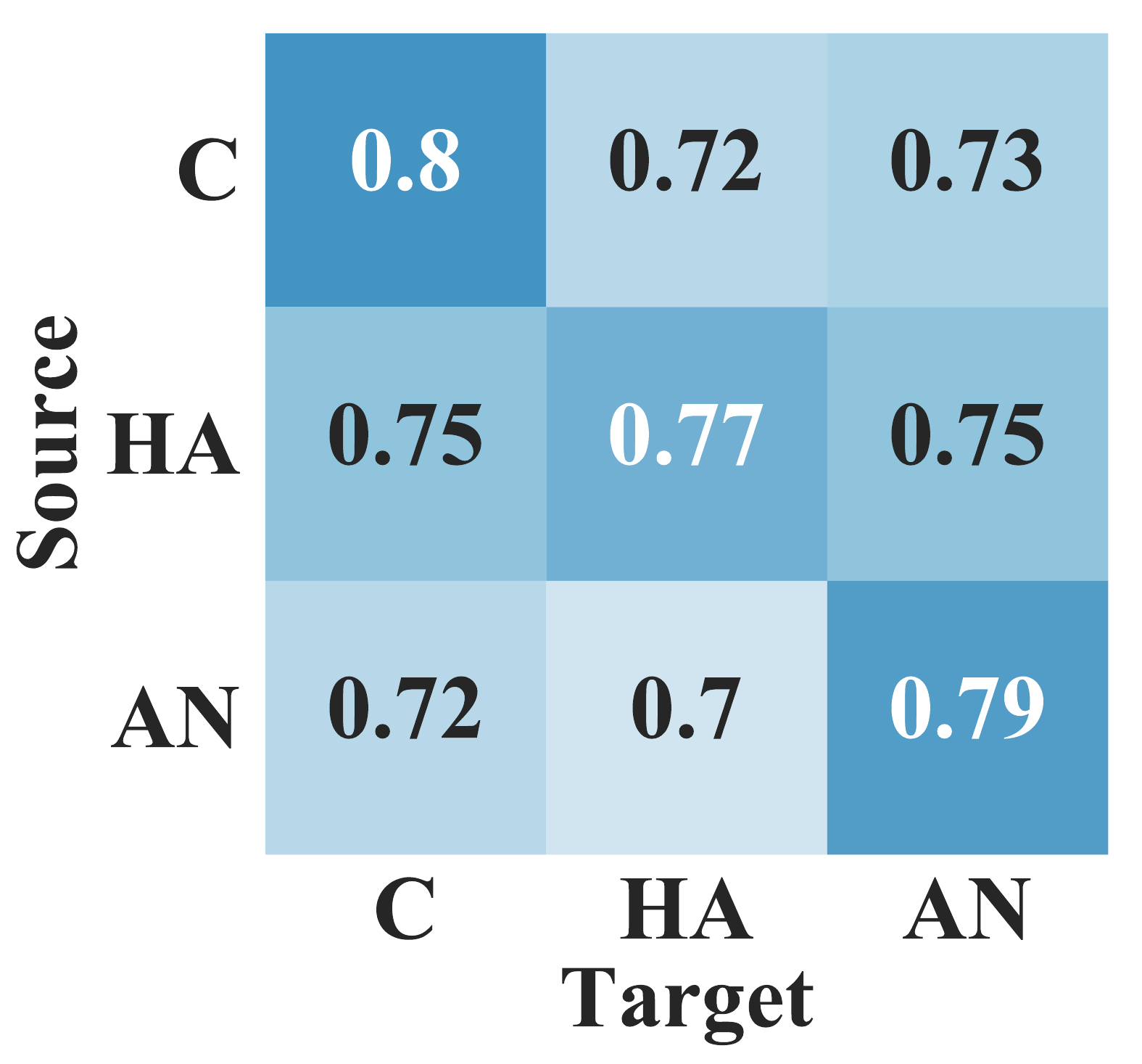}
              \caption{PAMAP2}
              \label{fig:heatmap_mod}
        \end{subfigure}%
         \begin{subfigure}{0.28\textwidth}
             \begin{subfigure}{\textwidth}
                  \includegraphics[width=\linewidth]{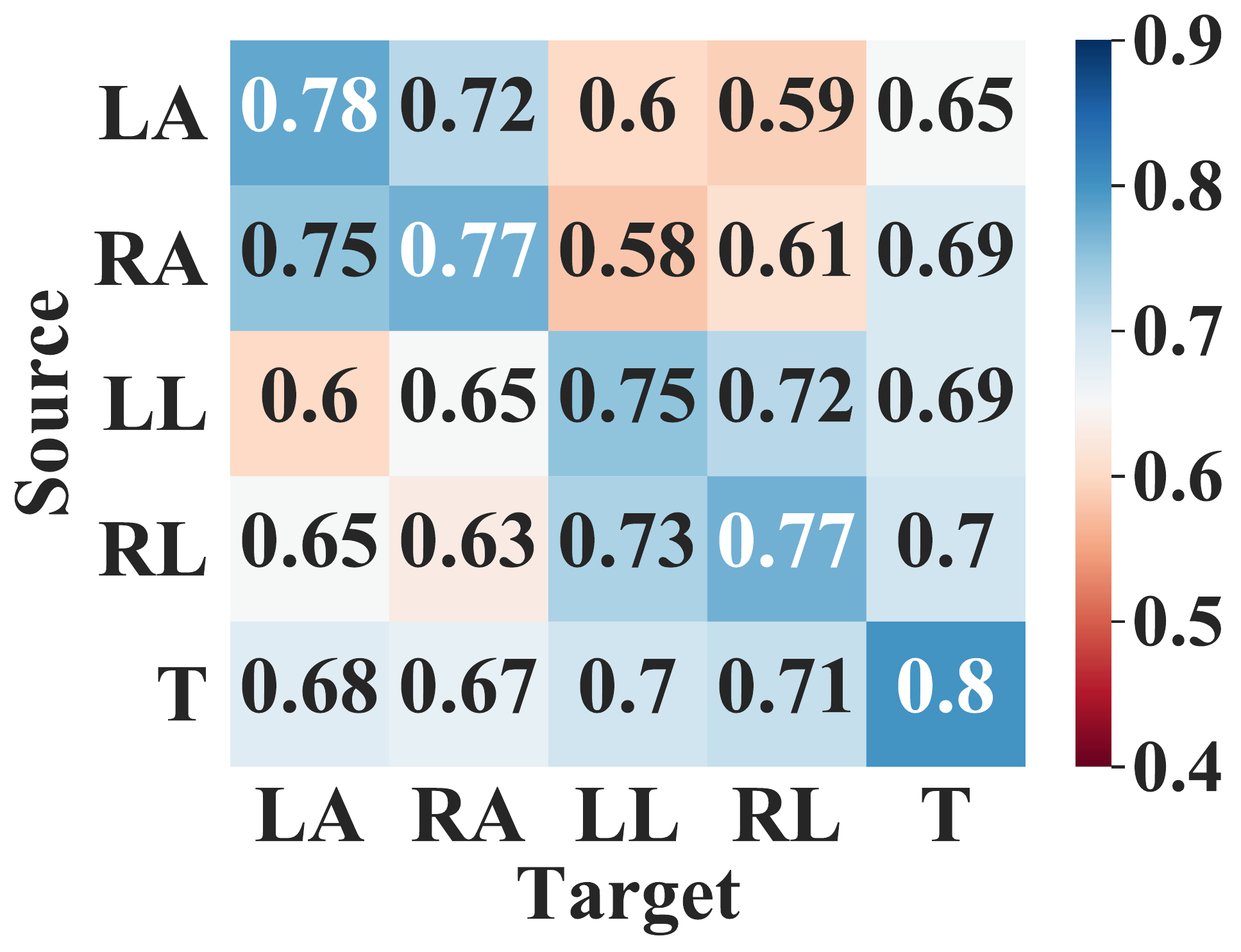}
                  \caption{DAS}
                  \label{fig:heatmap_loc}
                   \end{subfigure}
                   
        \end{subfigure}
    \caption{Labeling accuracy of ActiLabel for cross-location scenario.}
    \label{fig:labeling}
\end{figure}

\subsection{Performance of Activity Recognition}
    \tblref{activity} shows activity recognition performance for ActiLabel as well as algorithms under comparison, including baseline (BL), deep convolution LSTM (CL), DirectMap (DM), and upper-bound (UB) as discussed previously. We report the F1-Score value for each method as it is a better representative of the performance for unbalanced datasets.

    \subsubsection{Cross-Modality Transfer} \label{sec:map}
    We examined transfer learning across accelerometer, gyroscope, magnetometer, orientation, temperature, heart rate, and stretch sensor modalities. The cross-modality results in \tblref{activity} reflect average performance over all possible cross-modality scenarios. The baseline and ConvLSTM performed poorly overall three datasets, which shows the diverse distribution of data across sensors of different modalities. The DirectMap approach achieved $> 66.0\%$ F1-score over all three datasets. ActiLabel outperformed competing algorithms, in particular, DirectMap by $19.3\%$, $21.4\%$, and $6.7\%$ for PAMAP2, DAS, and Smartsock, respectively.
    
    \subsubsection{Cross-Location Transfer}
    We examined transfer learning among chest, ankle, hand, arms, legs, and torso locations. The cross-location results in \tblref{activity} represent average values over all possible transfer scenarios. The relatively low F1-scores of the baseline and ConvLSTM algorithms can be explained by the high level of diversity between the source and target domains during cross-location transfer learning. The DirectMap and ActiLabel both outperformed the baseline and ConvLSTM models. Specifically, DirectMap and ActiLabel $63.4\%$ and $70.8\%$ F1-Scores for PAMAP2, and $60.7\%$ and $68.4\%$ F1-Scores for DAS.

    \subsubsection{Cross-Subject Transfer}
    The DirectMap approach and ActiLabel obtained F1-Scores of $85.4\%$, and $82.7\%$ in PAMAP2, $77.59\%$ and $82.6\%$ in DAS, and $82.6\%$, and $77.5\%$ in Smartsock, respectively. Since there is a limited amount of data for each subject, ActiLabel could not capture high-level structures in the data. Therefore, it could not beat the state-of-the-art in all cases. All the algorithms achieved higher F1-score values compared to the cross-location and cross-modality scenarios. This observation suggests that data variations among different subjects can be normalized using techniques such as feature scaling, and feature selection before classification.

\begin{table}[t]
\small
\centering
\caption{Activity recognition performance (F1-Score).}
\begin{tabular}{llrrrrr}
\toprule
 \textbf{Scenario} &  \textbf{Dataset}    &  \textbf{BL}   &  \textbf{CL}  &  \textbf{DM} &  \textbf{AL} &  \textbf{UB} \\  \midrule
 
\multirow{3}{*}{\begin{tabular}[c]{@{}l@{}}\textbf{Cross-}\\  \textbf{modality}\end{tabular}}  &  \textbf{PAMAP2}     & 7.8   & 8.1   & 40.4 & \textbf{59.3} & \textcolor{darkgray}{80.8}  \\ 

&  \textbf{DAS}        & 9.3   & 8.2   & 44.8 &  \textbf{66.2}  & \textcolor{darkgray}{86.1}    \\ 

&  \textbf{Smartsock} & 16.2  & 12.8  & 66.0 &  \textbf{72.7} & \textcolor{darkgray}{84.2}   \\  \midrule

\multirow{2}{*}{\begin{tabular}[c]{@{}l@{}}\textbf{Cross-}\\  \textbf{location}\end{tabular}}    &  \textbf{PAMAP2}     & 14.3  & 12.7  & 63.4  &  \textbf{70.8}  & \textcolor{darkgray}{93.2} \\  

&  \textbf{DAS} & 13.2  & 12.4  & 60.7  &  \textbf{68.4}  & \textcolor{darkgray}{89.8}    \\  \midrule

\multicolumn{1}{l}{\multirow{3}{*}{\begin{tabular}[c]{@{}l@{}}\textbf{Cross-}\\  \textbf{location}\end{tabular}}} &  \textbf{PAMAP2}  & 65.8 & 61.9  & \textbf{85.4}  & 82.7  & \textcolor{darkgray}{98.1}    \\ 

\multicolumn{1}{l}{} &  \textbf{DAS}        & 67.1 & 56.8  & 79.0  &  \textbf{80.3}  & \textcolor{darkgray}{92.5}   \\  

\multicolumn{1}{l}{} &  \textbf{Smartsock} & 59.8  & 61.8  & \textbf{82.6}  & 80.0  & \textcolor{darkgray}{95.2}    \\  \midrule

\multicolumn{2}{c}{ \textbf{Average}}&{31.6} & 29.3 & 63.4 & \textbf{71.9}  &  \textcolor{darkgray}{89.9}    \\ \bottomrule
\end{tabular}
\label{tbl:activity}
\end{table}

    \section{Conclusion}
    We introduced ActiLabel, a computational framework with combinatorial optimization methodologies for transferring physical activity knowledge across highly diverse domains. ActiLabel extracts high-level structures from sensor observations in the target and source domains and learns labels in the target domain by finding an optimal mapping between dependency graphs in the source and target domains. ActiLabel provides consistently high accuracy for cross-domain knowledge transfer in various learning scenarios. Our extensive experimental results showed that ActiLabel achieves average F1-scores of $60.6\%$\%, $70.8$, and $82.7\%$ for cross-modality, cross-location, and cross-subject activity recognition, respectively. These results suggest that ActiLabel outperforms the competing algorithms by $36.3\%$, $32.7\%$, and $9.1\%$ in cross-modality, cross-location, and cross-subject learning, respectively.

\fontsize{9.3pt}{9.3pt} \selectfont
\bibliographystyle{named}
\bibliography{actilabel}

\end{document}